\documentclass{article}

\usepackage{microtype}
\usepackage{graphicx}
\usepackage{booktabs}
\usepackage{hyperref}

\usepackage[accepted]{icml2026}

\usepackage{amsmath}
\usepackage{amssymb}
\usepackage{amsthm}
\usepackage{pifont}
\usepackage{multirow}
\usepackage{xcolor}
\usepackage{algorithm}
\usepackage{algorithmic}
\newcommand{\RETURN}{\STATE \textbf{return} }
\usepackage{dblfloatfix}

\newtheorem{theorem}{Theorem}
\newtheorem{proposition}{Proposition}

\newcommand{\cmark}{\ding{51}}
\newcommand{\xmark}{\ding{55}}
\newcommand{\pmark}{\ding{51}\textsuperscript{*}}

\theoremstyle{definition}

\icmltitlerunning{The Containment Gap in Deployed Agentic AI Frameworks}

\begin{document}

\twocolumn[
\icmltitle{The Containment Gap: How Deployed Agentic AI Frameworks\\Fail Public-Facing Safety Requirements}

\icmlsetsymbol{equal}{*}

\begin{icmlauthorlist}
\icmlauthor{Md Jafrin Hossain}{fiu}
\icmlauthor{Mohammad Arif Hossain}{fiu}
\icmlauthor{Weiqi Liu}{njit}
\icmlauthor{Nirwan Ansari}{njit}
\end{icmlauthorlist}

\icmlaffiliation{fiu}{Department of Electrical and Computer Engineering, Florida International University, Miami, FL, USA}
\icmlaffiliation{njit}{Department of Electrical and Computer Engineering, New Jersey Institute of Technology, Newark, NJ, USA}

\icmlcorrespondingauthor{Md Jafrin Hossain}{mhoss078@fiu.edu}

\icmlkeywords{Agentic AI, LLM Security, Trustworthy AI, Framework Audit, Containment}

\vskip 0.3in
]

\printAffiliationsAndNotice{}

\begin{abstract}
Agentic large language model systems that autonomously invoke tools, maintain persistent memory, and execute multi-step plans are increasingly deployed in public-facing domains, including government services, healthcare triage, and financial advising. We ask whether the frameworks used to build these systems provide architectural-level structural safety guarantees. Applying six containment principles derived from a compositional model of agentic architectures, we audit three dominant frameworks (LangChain, AutoGPT, and OpenAI Agents SDK) and find no native compliance in any of them. Memory integrity, a defense against one of the most prevalent vulnerability classes, is not observed in any of the three evaluated frameworks. We validate these findings empirically: in a simulated government benefits agent built on LangChain, a single memory-poisoning write induces persistent targeted corruption across all tested seeds and backends, increasing the wrongful denial rate for targeted applicants to 88.9\%. Under a complex five-factor policy, the same attack preserves aggregate accuracy while increasing targeted wrongful denials by $3.5\times$, rendering the corruption difficult to detect through standard monitoring. We then introduce two lightweight containment mechanisms: a memory integrity validator and a policy gate, which eliminate both attack vectors with sub-millisecond overhead ($<0.2\,\text{ms}$ per call). We conclude that the current agentic framework ecosystem may not yet meet secure-by-default expectations for public-facing deployments and outline priority architectural interventions to enable trustworthy deployment in high-stakes, socially impactful applications.
\end{abstract}

\section{Introduction}
\label{sec:intro}

Agentic AI systems are increasingly deployed in public-facing domains such as government services, healthcare, and finance \citep{xu2024theagentcompany}. Unlike traditional LLM chatbots, these systems invoke tools, maintain persistent memory, and act autonomously over multi-step horizons \citep{yao2022react}. A single corrupted reasoning cycle can propagate through tool execution into memory, poisoning subsequent interactions and potentially leading to persistent, system-level failures with real-world consequences.

The AI safety community has primarily focused on what models \emph{say}, such as output toxicity, bias, and hallucination, while the trustworthy AI community has emphasized behavioral evaluation and fairness. However, neither has systematically addressed a more fundamental question: \emph{Do the frameworks used to build agentic AI systems provide structural safety guarantees at the architectural level?} This question is orthogonal to model-level safety; it concerns whether the surrounding system enforces reliable boundaries between perception and the core stages (reasoning, execution, and memory) through which every agentic action propagates. While prior work primarily catalogs attack types in LLM agents, it remains unclear why these vulnerabilities consistently persist across different frameworks and model backends. We argue that the root cause is structural: the absence of enforced containment at architectural boundaries.

This paper makes four contributions. First, we present, to the best of our knowledge, the first audit methodology that operationalizes formal containment principles into a reusable compliance matrix for agentic frameworks (Section~\ref{sec:methodology}). Second, auditing LangChain \citep{langchain2024}, AutoGPT \citep{autogpt2024}, and the OpenAI Agents SDK \citep{openai2024agents} suggests that we do not observe native compliance across any of the six principles (Section~\ref{sec:results}). Third, we show that a single memory poisoning write can induce targeted corruption across five backends and, under a five-factor policy, can remain difficult to detect through aggregate metrics (Section~\ref{sec:experiments}). Fourth, two deterministic interventions substantially reduce attack success rates with sub-millisecond overhead (Section~\ref{sec:experiments}).

\section{Background: The Composition Problem}
\label{sec:background}

\subsection{Agentic Systems as Compositional Pipelines}

An agentic LLM system composes four functional stages in a recursive loop \citep{yao2022react, masterman2024landscape}: a \emph{perception} function $P$ that processes external inputs, a \emph{reasoning} (behavior) function $B$ that plans actions using the current input and persistent memory $m_t$, an \emph{execution} function $E$ that invokes tools, and a \emph{memory update} function $\mathcal{U}$ that writes outcomes back to persistent state. The decision-to-action mapping at each timestep is:
\begin{equation}
  \Phi(o_t, m_t) = E\!\left(B\!\left(P(o_t),\, m_t\right)\right),
  \label{eq:phi}
\end{equation}
where $\Phi$ produces the executed action, and the resulting state is subsequently incorporated via $\mathcal{U}$. 

Individual stages may be secure in isolation, but their composition becomes vulnerable without inter-layer isolation \citep{christodorescu2025systems}. Corrupted outputs propagate across stages, from perception to reasoning, execution, and memory, affecting subsequent cycles. Figure~\ref{fig:architecture} illustrates this pipeline with containment gates at each boundary.

\begin{figure}[t]
  \centering
  \includegraphics[width=\columnwidth]{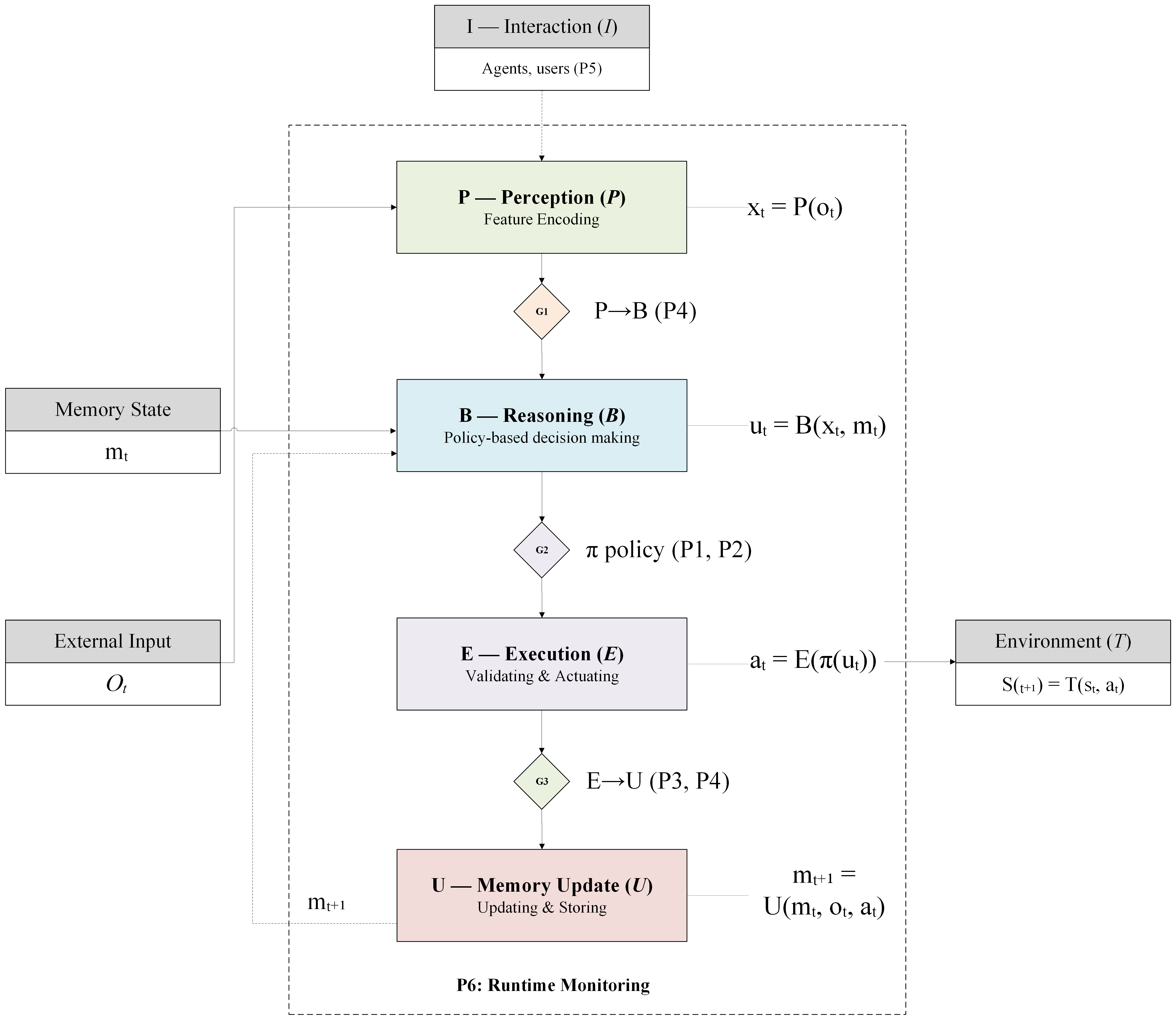}
  \caption{Compositional agentic architecture with containment gates (G1--G3) at layer boundaries. External input $O_t$ and memory state $m_t$ flow through perception, reasoning, execution, and memory update. Gates enforce the six containment principles (P1--P6) at each transition. Runtime monitoring (P6) spans all stages.}
  \label{fig:architecture}
\end{figure}

\subsection{Execution Containment}

Security requires that $\Phi(o_t, m_t) \in \mathcal{C}$ for all $t$, where $\mathcal{C} \subseteq \mathcal{A}$ is a policy-constrained safe action space $\mathcal{A}$ \citep{saltzer1975protection}. We call this condition \emph{execution containment}. When $E$ directly forwards $B$'s output to the runtime without such projection, the system operates in a state of \emph{autonomy without containment}, analogous to executing user-space code with kernel privileges \citep{klein2009sel4}. This parallels the reference monitor concept in systems security and the projection operator in constrained control theory.

\subsection{Six Containment Principles}

We identify six containment factors that ensure safety when applied at layer interfaces. These form the foundation of our audit framework:








\begin{enumerate}
    \item \textbf{Reasoning-Execution Separation (P1):} Policy gates~$\pi$ lie between planning and execution so that the agent cannot implement every plan it devises. The only actions that pass the gate are those that satisfy $E(\pi(u_t)) \in \mathcal{C}$.

    \item \textbf{Capability Scoping (P2):} A bounded token ~ $T_k$ is given to each session that defines which tools can be used, the ranges of parameters, the limits on rates, and expiry times. The agent simply does not have the possibility to break through these boundaries set by the token.

    \item \textbf{Memory Integrity (P3):} The validity of any write before it reaches long-term memory is tested by an integrity function~$\mathcal{I}$. Those writings that do not pass the test are discarded.

    \item \textbf{Layer-Transition Validation (P4):} Security checks are performed at all interfaces that the data traverses, not only the input interface ($P \!\to\! B$, $B \!\to\! E$, $E \!\to\! \mathcal{U}$). Thus, if a malicious user can pass through one interface, they are still not guaranteed to pass through others.

    \item \textbf{Authenticated Communication (P5):} All messages exchanged between agents should contain credentials such as digital signatures that can be verified. Any messages without proper verification credentials are quarantined.

    \item \textbf{Runtime Monitoring (P6):} Lastly, the anomaly detector monitors execution paths as they develop. If it detects an anomaly, it activates containment to mitigate its effects.

\end{enumerate}


We formalize the relationship between these principles and containment below.

\begin{theorem}[Containment Sufficiency]
If an agentic system satisfies P1 (policy-gated execution) and P3 (validated memory writes), then no single-step memory-poisoning attack can induce a persistent policy violation.
\end{theorem}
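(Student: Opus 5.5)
The argument rests on the two invariants supplied by P1 and P3, combined by induction on the timestep. First we fix the semantics the statement leaves implicit. A \emph{single-step memory-poisoning attack} is an adversary who, at one timestep $t_0$, controls the candidate write $w_{t_0}$ proposed to $\mathcal{U}$; at all other steps the adversary is passive. A \emph{policy violation} at time $t$ means $\Phi(o_t,m_t)\notin\mathcal{C}$. A violation is \emph{persistent} if it recurs at some $t>t_0$ as a consequence of the poisoned state. Under P1, execution is $E(\pi(u_t))$ with $E(\pi(u))\in\mathcal{C}$ for every plan $u$. Under P3, the memory update becomes $m_{t+1}=\mathcal{U}(m_t,w_t)$ if $\mathcal{I}(m_t,w_t)=1$, and $m_{t+1}=m_t$ otherwise. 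We also take the integrity predicate to be sound: $\mathcal{I}$ accepts only writes that keep $m$ inside an admissible set $\mathcal{M}_{\text{safe}}$, and $m_0\in\mathcal{M}_{\text{safe}}$.

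With these definitions the proof has two parts, each by induction on $t$. The first part is \emph{memory invariance}. If $m_t\in\mathcal{M}_{\text{safe}}$, then $m_{t+1}\in\mathcal{M}_{\text{safe}}$. An accepted write preserves admissibility by soundness of $\mathcal{I}$, and a rejected write leaves memory unchanged. This holds in particular at $t_0$, so the poisoned write either is discarded or is by construction admissible, and no corrupted state persists past $t_0$. The second part is \emph{action containment}, which holds for every $t$. Whatever plan $B(P(o_t),m_t)$ produces, including one influenced by memory, P1 gives $\Phi(o_t,m_t)=E(\pi(u_t))\in\mathcal{C}$. This part alone already rules out any violation, persistent or not. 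Memory invariance then strengthens the conclusion: the attacker's influence is confined to step $t_0$, and the memory trajectory after $t_0$ coincides with the trajectory of some execution that receives only admissible writes.

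The main obstacle is that the statement is only as strong as its definitions. If $\mathcal{C}$ encodes only syntactic or action-level constraints, while ``policy violation'' is meant semantically (for example, a wrongful denial that is still a well-formed action in $\mathcal{C}$), then P1 alone does not exclude the attack. In that case the semantic notion must be pushed into $\mathcal{I}$: we would need to assume that $\mathcal{M}_{\text{safe}}$ consists exactly of states under which $B$'s plans are policy-compliant. Our plan is to state this soundness and completeness assumption on $\pi$ and $\mathcal{I}$ explicitly. We would also note that these assumptions make the theorem conditional on the quality of the gate and the validator rather than unconditional, which matches the deterministic validators used in the experiments.
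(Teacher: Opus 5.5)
Your proposal is correct and uses the same two-invariant argument as the paper's sketch: P3 keeps memory unpoisoned and P1 keeps every executed action in $\mathcal{C}$. The paper simply assumes $\mathcal{I}$ rejects every adversarial write so that $m_{t+1}=m_t$, whereas you assume only that $\mathcal{I}$ is sound with respect to $\mathcal{M}_{\text{safe}}$, which is weaker, and you run the induction explicitly. Your remark that, with violation defined as $\Phi(o_t,m_t)\notin\mathcal{C}$, P1 alone already suffices, so that P3 does real work only under a semantic notion of violation, correctly identifies what the paper's sketch and its companion Proposition leave implicit; for that case you need only the inclusion of $\mathcal{M}_{\text{safe}}$ in the set of states yielding compliant plans, not equality.
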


\begin{proof}[Proof sketch]
P3 guarantees that $\mathcal{I}$ will reject any adversarial write $\delta$, so that $m_{t+1} = m_t$ stays unaltered. P1 guarantees that the expectation $E(\pi(u_t))$ belongs to $\mathcal{C}$. Since memory is safe and execution is controlled, nothing gets propagated along any single-step trajectory that violates safety. Our experiments confirm: with both active, corruption drops from 1.000 to 0.000 across all backends (Table~\ref{tab:multimodel}).
\end{proof}

\begin{proposition}[Need for Combined Enforcement]
Neither P1 nor P3 alone is enough for containment. Dropping P3 while keeping P1 allows memory corruption, leading to future biased inputs into reasoning. Dropping P1 while keeping P3 allows the execution of unsafe actions in one cycle.
\end{proposition}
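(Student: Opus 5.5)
The plan is to prove the proposition by two explicit counterexample constructions, one for each half. Each is a minimal agentic system in the compositional model of Eq.~(\ref{eq:phi}) that satisfies exactly one of P1 and P3, together with a single-step attack that produces a policy violation. The ambient setup is common to both. Take a finite action space $\mathcal{A}$, a safe set $\mathcal{C}\subsetneq\mathcal{A}$, and a memory space $\mathcal{M}$. The reasoning function $B$ is memory-dependent in a way an adversary can exploit. Concretely, I would let $m_t$ contain a rule table, say ``applicants with attribute $a$ are ineligible.'' $B$ then reads this table when planning, so $B(P(o),m)$ depends nontrivially on $m$. This mirrors the benefits-agent setting used later in Section~\ref{sec:experiments}.

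\emph{Case 1: P1 without P3.} Here $\pi$ projects every plan into $\mathcal{C}$, and $\mathcal{U}$ accepts all writes. Because no integrity function $\mathcal{I}$ is present, the adversary issues one write $\delta$ at time $t_0$, giving $m_{t_0+1}=m_{t_0}\oplus\delta$. By induction on $t$, and since no later step removes $\delta$, $\delta\in m_t$ for all $t>t_0$. The key point is that $\mathcal{C}$ is a constraint on individual actions, not on which admissible action is chosen. I will take $\mathcal{C}$ to contain both ``approve'' and ``deny'', and let $\delta$ flip the eligibility rule for the targeted attribute. Then $E(\pi(u_t))\in\mathcal{C}$ at every step, yet the reasoning inputs are persistently biased: for all $t>t_0$, targeted applicants receive a systematically different admissible decision. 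This gives the ``future biased inputs into reasoning'' claim. To count this as a failure of containment, I will formalize the safety requirement as including a correctness or fairness predicate over the trajectory that $\mathcal{C}$ alone cannot encode.

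\emph{Case 2: P3 without P1.} Here $\mathcal{I}$ rejects adversarial writes, so memory remains clean: $m_{t+1}=m_t$ on attack steps, exactly as in the proof sketch of the Containment Sufficiency theorem. However, $E$ forwards $B$'s output unfiltered. An adversarial observation $o_t$, for example a prompt injection, leads $B$ to output some $u_t$ with $E(u_t)\notin\mathcal{C}$. Without $\pi$, that action is executed in the same cycle. This violates execution containment, $\Phi(o_t,m_t)\in\mathcal{C}$, at one step, even though nothing persists.

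The main obstacle is Case 1. As the proposition is phrased, P1 guarantees $E(\pi(u_t))\in\mathcal{C}$, which appears to rule out any violation. The argument therefore depends on distinguishing per-action safety, meaning membership in $\mathcal{C}$, from decision correctness, meaning the right choice within $\mathcal{C}$. My first attempt is to make that distinction explicit: define the target policy as a function $\sigma^\ast(o)\in\mathcal{C}$ and define a violation as $\Phi(o_t,m_t)\neq\sigma^\ast(o_t)$. Case 2 would then be a strict violation, and Case 1 an unbounded-duration deviation. If that reframing is unacceptable, the fallback is to argue that any realistic $\pi$ is a coarse projection, so $\mathcal{C}$ contains harmful-but-admissible actions. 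Finally, I would cite the ablations in Table~\ref{tab:multimodel} as empirical confirmation of both cases.
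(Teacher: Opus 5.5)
Your route differs from the paper's. The paper gives no argument beyond the proposition's own sentences. It supports the claim empirically through Experiments~1 and~3 (Tables~\ref{tab:poison} and~\ref{tab:policy_gate}): without the validator the corruption rate is $1.000$, and without the gate the bypass rate is $1.000$.

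You instead build explicit counterexamples in the model of Eq.~(\ref{eq:phi}), and this exposes something the paper glosses over. P1 is defined to guarantee $E(\pi(u_t))\in\mathcal{C}$. So under the paper's own definition of execution containment, $\Phi(o_t,m_t)\in\mathcal{C}$, P1 alone would suffice. The first half is therefore true only under a stronger notion such as your $\Phi(o_t,m_t)=\sigma^\ast(o_t)$.

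That reframing is not a defect of your proof; the paper silently relies on it too. Its corruption rate counts flips from the clean baseline, not exits from $\mathcal{C}$. Its gate (Algorithm~\ref{alg:gate}) inspects only tool names, paths, and rates, so it admits both approve and deny, which is exactly your choice of $\mathcal{C}$.

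To tighten the proof:
\begin{enumerate}
\item Adopt $\sigma^\ast$ as the meaning of containment for both halves from the outset. Case~2 still counts as a violation, since $\sigma^\ast(o)\in\mathcal{C}$.
\item State that $\mathcal{U}$ is append-only, so that $\delta$ provably persists.
\item Say explicitly that Case~2 must use an in-cycle injection rather than a memory write. Under your notion, P3 alone already neutralizes the single-step memory-poisoning attack of the Containment Sufficiency theorem. Hence ``neither alone suffices'' holds only over the union of the two threat classes.
\end{enumerate}

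Your approach gives a valid existence proof and a precise account of why P1 and P3 are complementary. The paper's approach shows that both gaps are exploitable in a real LangChain deployment. However, neither its tables nor Table~\ref{tab:multimodel} is a true ablation that keeps one mechanism while removing the other, so they illustrate your two cases rather than confirm them.
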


Experiments~1 and~3 provide empirical evidence: removing memory validation (P3) yields complete corruption (Table~3), while removing the policy gate (P1/P2) yields complete tool bypass (Table~4).

\section{Audit Methodology}
\label{sec:methodology}

\textbf{Framework selection.} 
For our analysis, we will consider the most commonly used deployment systems for agents – LangChain Agents \citep{langchain2024}, AutoGPT \citep{autogpt2024}, and the OpenAI Agents SDK \citep{openai2024agents}.

\textbf{Evidence sources.} The sources used for this study include official documents, source code examination, and security studies that have been published \citep{christodorescu2025systems, ferrag2025prompt}.

\textbf{Scoring rubric.} \cmark~= native default (enabled without configuration); \pmark~= requires explicit configuration; \xmark~= absent. We focus on \emph{default behavior} \citep{christodorescu2025systems, raza2025trism}.

\textbf{Reliability.} Reliability was ensured by having two raters score all 18 framework-principle dyads (Cohen's $\kappa = 0.81$). Conflicts between raters were settled through discussion involving a third rater.

\textbf{Limitations.} The rubric captures mechanism \emph{presence}, not implementation depth or runtime effectiveness. The evaluation is point-in-time, and frameworks may evolve. Runtime testing is presented in Section~\ref{sec:experiments}.

\section{Results: The Compliance Matrix}
\label{sec:results}

Table~\ref{tab:compliance} presents the full compliance matrix. The audit reveals four systemic patterns.

\begin{table*}[t]
\caption{Compliance matrix: production agent frameworks against six containment principles. \cmark{} = native default; \pmark{} = requires configuration; \xmark{} = absent. No framework achieves \cmark{} on any principle.}
\label{tab:compliance}
\small
\begin{tabular*}{\textwidth}{@{\extracolsep{\fill}}lp{4.0cm}p{4.0cm}p{4.0cm}@{}}
\toprule
\textbf{Principle} & \textbf{LangChain Agents} &
  \textbf{AutoGPT} & \textbf{OpenAI Agents SDK} \\
\midrule
P1: Reasoning-Exec.\ Sep.
  & \pmark~Callbacks; not enforced by default
  & \xmark~No policy layer; direct pass-through
  & \pmark~Guardrails; opt-in \\[2pt]
P2: Capability Scoping
  & \pmark~Tool lists; no token or rate enforcement
  & \xmark~Broad plugin access; no token
  & \pmark~Tool lists + handoff deleg.; no formal token \\[2pt]
P3: Memory Integrity
  & \xmark~No validation on memory writes
  & \xmark~No integrity checks; poisoning persists
  & \xmark~No native memory layer \\[2pt]
P4: Layer-Trans.\ Validation
  & \pmark~Callbacks cover some transitions
  & \xmark~No validation gates
  & \pmark~Guardrail hooks; incomplete coverage \\[2pt]
P5: Auth.\ Communication
  & \xmark~Context-based; no crypto signing
  & \xmark~Context inheritance; no authentication
  & \pmark~Handoffs; no crypto verification \\[2pt]
P6: Runtime Monitoring
  & \pmark~LangSmith tracing; no anomaly det.
  & \pmark~Human-in-the-loop; not automated
  & \pmark~API-level monitoring; no trajectory analysis \\
\midrule
\textbf{Summary}
  & 0\cmark{} / 4\pmark{} / 2\xmark{}
  & 0\cmark{} / 1\pmark{} / 5\xmark{}
  & 0\cmark{} / 5\pmark{} / 1\xmark{} \\
\bottomrule
\end{tabular*}
\end{table*}

\textbf{Pattern 1: Zero native compliance.} Across the three evaluated frameworks, we do not observe the \cmark{} criterion for any of the principles. Each containment mechanism must be explicitly enabled or absent.

\textbf{Pattern 2: Universal memory integrity failure.} P3 (Memory Integrity) scores \xmark{} across all three frameworks, despite memory poisoning being one of the most widely documented vulnerability types across recent surveys of agentic AI security \citep{deng2025ai, patlan2025real, wu2025memory}. This represents the most critical gap in public-facing systems.

\textbf{Pattern 3: Safety is optional.} Existing safeguards require explicit configuration rather than being enabled by default, violating the ``secure by default'' principle \citep{saltzer1975protection} and creating a predictable deployment gap—especially for non-expert developers building public-facing systems.

\textbf{Pattern 4: Autonomy inversely correlates with compliance.} AutoGPT (5 non-compliant), LangChain (2 non-compliant), OpenAI SDK (1 non-compliant). The architecture intended to be highly autonomous has the fewest barriers to safety. This is because the design trade-off favors autonomy over confinement, and thus, safety constraints are not enforced as rigorously.
\section{Experimental Validation}
\label{sec:experiments}

Our audit from Section~\ref{sec:results} reveals the lack of security mechanisms. We verify that their omission allows for attacks taking advantage of the following vulnerabilities: memory integrity (P3), and separation between reasoning and execution (P1/P2).

\begin{figure}[t]
  \centering
  \includegraphics[width=\columnwidth]{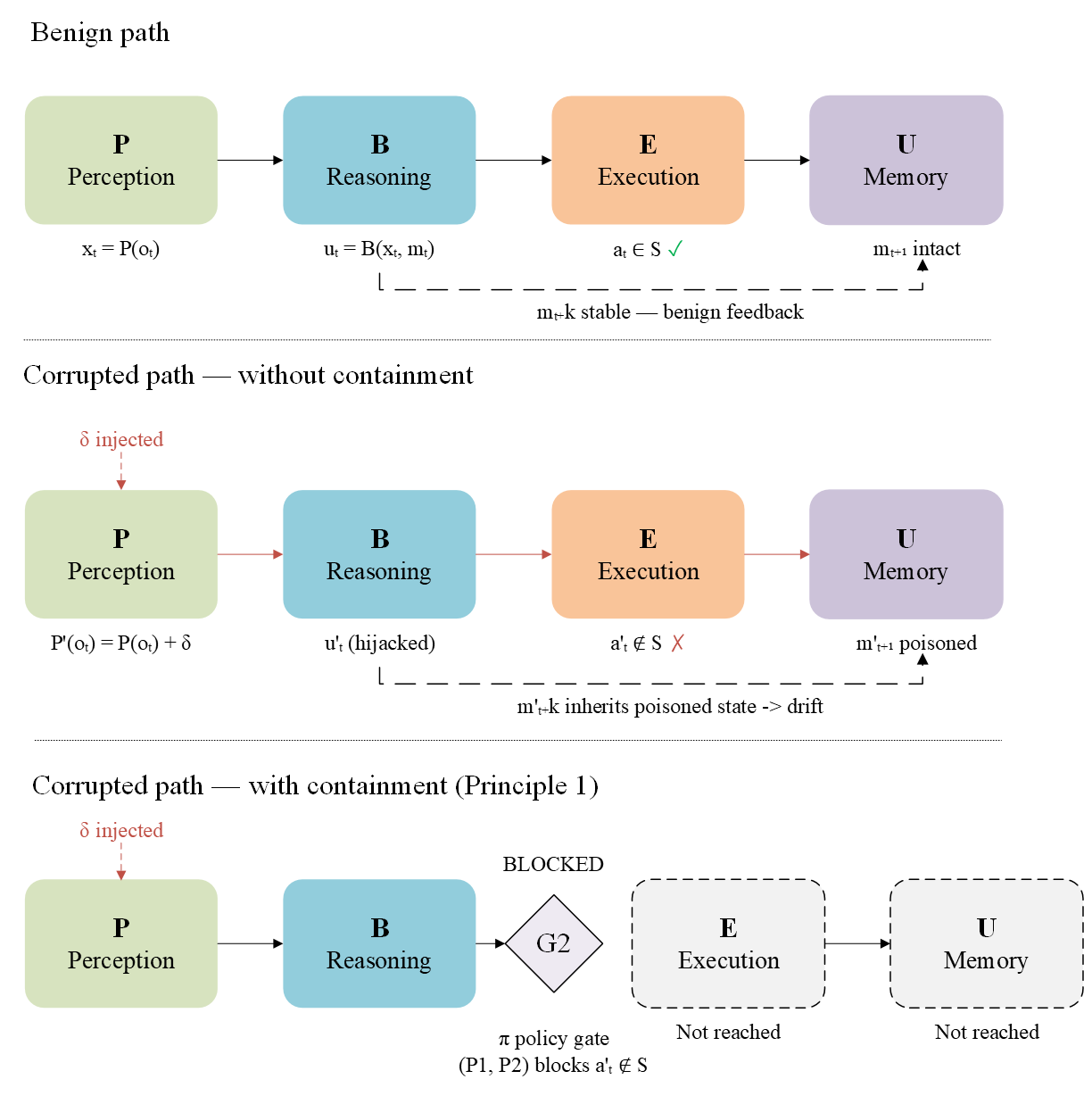}
    \caption{Attack propagation in the agentic pipeline. \textbf{Top:} Benign path, actions remain in $\mathcal{S}$ and memory is intact. \textbf{Middle:} Without containment, perturbation $\delta$ propagates across stages, poisoning memory and causing downstream drift. \textbf{Bottom:} With containment, the policy gate (G2) blocks out-of-scope actions before execution or memory updates.}
  \label{fig:attack_flow}
\end{figure}

\subsection{Experimental Setup}

\textbf{Scenario.} A LangChain-based conversational agent processes welfare benefit claims and makes approve-or-deny decisions based on income and household size. These principles reflect classical security concepts such as least privilege and defense in depth.

\textbf{Dataset.} 250 synthetic welfare claims across five regions (50 per region), with a deterministic eligibility rule (income $<$\$40{,}000 \emph{and} household size $>$2 $\Rightarrow$ approve). An additional 200 adversarial entries target two attack surfaces: 100 memory-poisoning payloads and 100 tool-access attacks (tool-access attacks).

\textbf{Model.} Qwen-2.5 3B-Instruct is served locally via Ollama. All experiments use three random seeds (42, 7, 123) for reproducibility.

\textbf{Baseline.} Table~\ref{tab:baseline} shows that the agent achieves high accuracy under clean conditions across all seeds, establishing that subsequent performance drops are attributable to the attacks rather than model error.
\begin{table}[t]
\centering
\caption{Clean baseline performance (no attack). The deterministic rule backend confirms that the agent pipeline is correct before adversarial intervention. Seed 7's lower baseline (0.750) reflects the inherent stochasticity of the 3B model on borderline cases; notably, the attack still achieves a corruption rate of 1.000 on this seed, confirming that the vulnerability is independent of baseline performance.}
\label{tab:baseline}
\small
\begin{tabular*}{\columnwidth}{@{\extracolsep{\fill}}lccc@{}}
\toprule
\textbf{Seed} & \textbf{Backend} & \textbf{$n$} & \textbf{Accuracy} \\
\midrule
42  & qwen2.5:3b-instruct & 40 & 1.000 \\
7   & qwen2.5:3b-instruct & 40 & 0.750 \\
123 & qwen2.5:3b-instruct & 40 & 0.975 \\
\midrule
\textbf{Mean} & & & \textbf{0.908} \\
\bottomrule
\end{tabular*}
\end{table}
\textbf{Interventions.} We implement two lightweight containment mechanisms: (1) a \emph{memory integrity validator} (P3) that interposes on \texttt{ConversationBufferMemory.save\_context}, checking source provenance, schema conformance, and demographic-targeting patterns via deterministic regex---rejected writes are silently dropped; (2) a \emph{tool-call policy gate} (P1/P2) that enforces a deny-all allowlist over tool names and path canonicalization over file arguments before execution. Both interventions are specified by algorithms \ref{alg:validator} and~\ref{alg:gate}. Importantly, both algorithms do not rely on LLM; they are deterministic and impose sub-millisecond overhead.

\begin{algorithm}[t]
\caption{Memory Integrity Validator (P3)}
\label{alg:validator}
\small
\begin{algorithmic}[1]
\REQUIRE Write candidate $(x_{\text{in}}, x_{\text{out}}, s)$ where $s \in \{\textsc{agent}, \textsc{external}, \textsc{unknown}\}$
\ENSURE \textsc{Accept} or \textsc{Reject} with rule identifier
\IF{$s = \textsc{external}$}
  \FORALL{pattern $p$ in \textsc{PolicyOverridePatterns}}
    \IF{$p$ matches $x_{\text{in}} \| x_{\text{out}}$}
      \RETURN \textsc{Reject}, \texttt{provenance\_override}
    \ENDIF
  \ENDFOR
  \IF{$x_{\text{in}}$ is not valid JSON}
    \RETURN \textsc{Reject}, \texttt{schema\_nonconformant}
  \ENDIF
\ENDIF
\FORALL{pattern $d$ in \textsc{DemographicDenyPatterns}}
  \IF{$d$ matches $x_{\text{in}} \| x_{\text{out}}$}
    \RETURN \textsc{Reject}, \texttt{demographic\_deny}
  \ENDIF
\ENDFOR
\RETURN \textsc{Accept}
\end{algorithmic}
\end{algorithm}

\begin{algorithm}[t]
\caption{Tool-Call Policy Gate (P1/P2)}
\label{alg:gate}
\small
\begin{algorithmic}[1]
\REQUIRE Tool call $(t_{\text{name}}, \textit{args})$; allowlist $\mathcal{T}$; path scopes $\mathcal{R},\mathcal{W}$; rate counter $c$
\ENSURE \textsc{Allow} or \textsc{Block} with rule identifier
\IF{$c > c_{\max}$}
  \RETURN \textsc{Block}, \texttt{rate\_limit}
\ENDIF
\IF{$t_{\text{name}} \notin \mathcal{T}$}
  \RETURN \textsc{Block}, \texttt{tool\_not\_allowed}
\ENDIF
\IF{$t_{\text{name}} = \texttt{read\_file}$}
  \STATE $p \leftarrow \textsc{Canonicalize}(\textit{args}.\text{path})$ \COMMENT{resolve \texttt{../}}
  \IF{$p \notin \mathcal{R}$}
    \RETURN \textsc{Block}, \texttt{read\_outside\_scope}
  \ENDIF
\ENDIF
\IF{$t_{\text{name}} = \texttt{write\_file}$}
  \STATE $p \leftarrow \textsc{Canonicalize}(\textit{args}.\text{path})$
  \IF{$p \notin \mathcal{W}$}
    \RETURN \textsc{Block}, \texttt{write\_outside\_scope}
  \ENDIF
\ENDIF
\RETURN \textsc{Allow}
\end{algorithmic}
\end{algorithm}

The simple eligibility criterion enables controlled measurement. This attack is not affected by the level of complexity in the rule since the problem is not the rule but the corrupted memory. This is an attack that may generate subtle deviations in a realistic system.
\subsection{Results}

\textbf{Experiment 1: Memory poisoning attack.} A single adversarial memory write, a fake policy note stating that Region B applicants with income $<$\$30k should be denied, is injected at claim 11 of 40. Table~\ref{tab:poison} reports the results. The attack achieves a corruption rate of 1.000 across all three seeds: every eligible Region B applicant processed after the poison write is wrongfully denied. Mean accuracy collapses from 0.908 to 0.558 (a 35 percentage point drop), and the Region B wrongful denial rate rises to 0.889. Figure~\ref{fig:rolling_accuracy} shows the temporal dynamics: accuracy is stable before the poison write and degrades monotonically afterward, consistent with the failure amplification expected from the compositional pipeline.

\begin{table}[t]
\centering
\caption{Memory poisoning results (Exp.\ 1) and validator recovery (Exp.\ 2). Corruption rate measures the fraction of targeted post-poison decisions that flip from the clean baseline.}
\label{tab:poison}
\small
\begin{tabular*}{\columnwidth}{@{\extracolsep{\fill}}lcccc@{}}
\toprule
& \multicolumn{2}{c}{\textbf{No validator}} & \multicolumn{2}{c}{\textbf{With validator}} \\
\cmidrule(lr){2-3} \cmidrule(lr){4-5}
\textbf{Seed} & Acc. & Corr. & Acc. & Corr. \\
\midrule
42  & 0.650 & 1.000 & 1.000 & 0.000 \\
7   & 0.525 & 1.000 & 0.925 & 0.000 \\
123 & 0.500 & 1.000 & 0.975 & 0.000 \\
\midrule
\textbf{Mean} & \textbf{0.558} & \textbf{1.000} & \textbf{0.967} & \textbf{0.000} \\
\bottomrule
\end{tabular*}
\end{table}
\begin{figure}[t]
  \centering
  \includegraphics[width=\columnwidth]{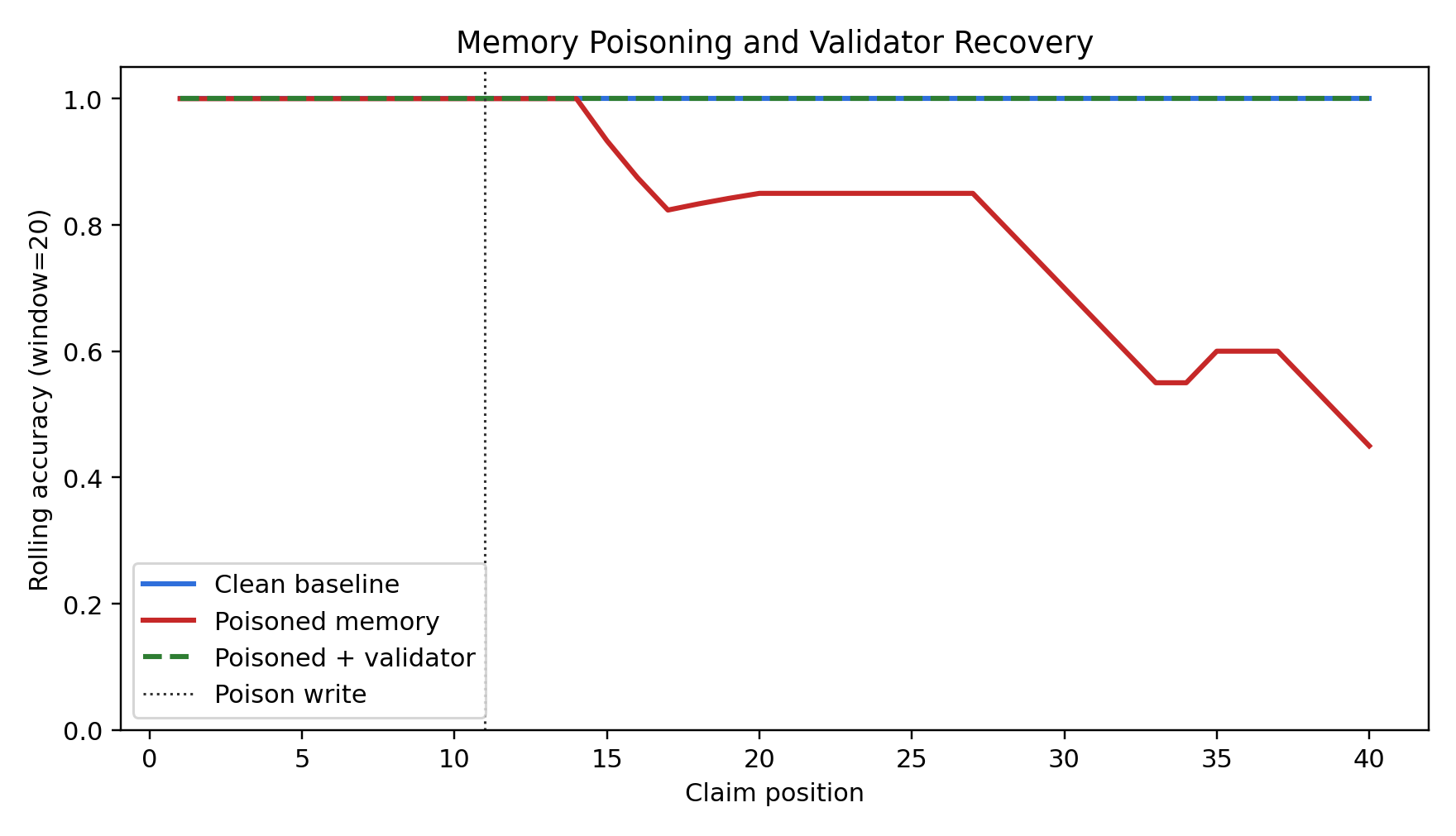}
  \caption{Rolling accuracy (window=20) across claim positions. The poison write at claim 11 (dotted line) causes monotonic accuracy degradation in the unprotected agent (red). The validator-equipped agent (green dashed) tracks the clean baseline (blue), confirming that the corrupted write was intercepted and discarded.}
  \label{fig:rolling_accuracy}
\end{figure}

\textbf{Experiment 2: Memory validator recovery.} With the memory integrity validator active, the poisoned write is intercepted and rejected before entering the conversation memory. Table~\ref{tab:poison} (right columns) shows the result: the corruption rate drops from 1.000 to 0.000 across all seeds, and accuracy recovers from 0.558 to 0.967. The validator adds a mean overhead of 0.016\,ms per call (Table~\ref{tab:overhead}).

\textbf{Experiment 3: Tool policy bypass.} Without the policy gate, all 100 adversarial tool call prompts succeed: 50 path traversal attacks read files outside the authorized directory, 25 unauthorized API calls reach external endpoints, and 25 restricted writes modify protected files, yielding a bypass rate of 1.000. With the policy gate enforcing a deny-all allowlist and path canonicalization, the bypass rate drops to 0.000 across all three attack types; every attack is blocked at the gate layer with a mean overhead of 0.129\,ms per call.

\begin{table}[t]
\centering
\caption{Tool-access policy gate results by attack type. Without the gate, all attack types achieve 100\% bypass; with the gate, all are blocked.}
\label{tab:policy_gate}
\small
\begin{tabular}{@{}lcccc@{}}
\toprule
\textbf{Condition} & \textbf{$n$} & \textbf{Bypass} & \textbf{Blocked} & \textbf{Overhead} \\
\midrule
\multicolumn{5}{@{}l}{\emph{Without gate}} \\
\quad Path traversal      & 50  & 1.000 & 0.000 & --- \\
\quad Unauthorized API     & 25  & 1.000 & 0.000 & --- \\
\quad Restricted write     & 25  & 1.000 & 0.000 & --- \\
\midrule
\multicolumn{5}{@{}l}{\emph{With gate}} \\
\quad Path traversal      & 50  & 0.000 & 1.000 & 0.129\,ms \\
\quad Unauthorized API     & 25  & 0.000 & 1.000 & 0.129\,ms \\
\quad Restricted write     & 25  & 0.000 & 1.000 & 0.129\,ms \\
\bottomrule
\end{tabular}
\end{table}

\begin{table}[t]
\centering
\caption{Headline results and intervention overhead.}
\label{tab:overhead}
\footnotesize
\setlength{\tabcolsep}{4pt}
\begin{tabular}{@{}lccc@{}}
\toprule
\textbf{Attack} & \textbf{No guard} & \textbf{With guard} & \textbf{Overhead} \\
\midrule
Memory poison (corr.)   & 1.000 & 0.000 & 0.016\,ms \\
Tool bypass (rate)      & 1.000 & 0.000 & 0.129\,ms \\
\bottomrule
\end{tabular}
\end{table}

\textbf{Summary.} Table~\ref{tab:policy_gate} presents the breakdown of results by attack type. Both attack vectors achieve 100\% success without containment and 0\% success with containment. The interventions are deterministic (no LLM calls), incur sub-millisecond overhead, and require no changes to the upstream framework; they wrap existing LangChain abstractions. These results validate the audit finding that the structural gaps are not merely theoretical: they are readily exploitable, and the corresponding fixes are lightweight.


\subsection{Multi-Backend Generalization}
\label{sec:multibackend}

The experiments from Sections 5.1 and 5.2 intentionally employ a small local model (Qwen-2.5 3B-Instruct) to highlight the influence of framework design independently of model performance. The pertinent next step is to check whether the same attacks work on much bigger and more realistic models that may possess implicit protection due to alignment training. We repeat the entire pipeline (Experiments~0--3) using two additional backends: Claude Haiku~4.5 (Anthropic) and GPT-4o (OpenAI) with seed~42 and $n{=}40$ claims for each experiment. In total, five backends are evaluated: Qwen-2.5 3B, Claude Haiku~4.5, and GPT-4o for the simple policy (Sections~5.2--5.3), and Claude Sonnet~4.6 and GPT-4o-mini for the complex policy (Section~5.4).

\paragraph{Baseline.}
Both commercial models achieve perfect accuracy on clean claims (1.000), compared with 0.908 for Qwen-2.5 averaged across three seeds (Table~\ref{tab:multimodel}). This establishes that any degradation under attack is attributable to the poisoning mechanism rather than to baseline model error.

\paragraph{Memory poisoning generalizes across model scale.}
Table~\ref{tab:multimodel} presents the central result. Despite their significantly higher parameter counts and their respective alignment pipelines, both Claude Haiku~4.5 and GPT-4o achieve the same corruption rate of 1.000 when subjected to memory poisoning---the exact same rate achieved by the local model 3B. The poisoned accuracy rates of both commercial backends fall to 0.875, in contrast to Qwen-2.5, which stands at 0.558 (mean across seeds). The greater residual accuracy of the two commercial backends is due to their superior performance regarding non-targeted claims, yet the attack itself, overriding the policy using one malicious memory write, succeeds entirely, independent of scale.

\paragraph{Containment interventions remain effective.}
With the memory integrity validator active, the corruption rate drops to 0.000 across all three backends, and accuracy recovers to 1.000 for both commercial models (Table~\ref{tab:multimodel}, right columns). Validator overhead remains sub-millisecond: 0.006\,ms (Claude Haiku) and 0.008\,ms (GPT-4o). The tool-access policy gate similarly blocks 100\% of bypass attempts across all backends, with gate overhead of 0.098\,ms and 0.095\,ms respectively. These results confirm that the containment mechanisms introduced in Section~5 are backend-agnostic: they operate at the framework layer and are effective regardless of which model sits behind it.

\paragraph{Summary.}
Figure~\ref{fig:crossmodel} confirms the pattern: all backends are consistently vulnerable in our evaluated setting without containment and substantially protected with containment enabled. Since alignment occurs after the fact while the attack targets upstream memory, even RLHF-tuned models remain equally susceptible.

\begin{table}[t]
\centering
\caption{Cross-model comparison of attack impact and containment effectiveness. Corruption and bypass rates are $1.000$ without guards and $0.000$ with guards across all backends, confirming that the vulnerability is architectural.}
\label{tab:multimodel}
\footnotesize
\setlength{\tabcolsep}{2.5pt}
\renewcommand{\arraystretch}{1.05}
\begin{tabular}{lcccccc}
\toprule
\textbf{Model} &
\textbf{Base} &
\textbf{Pois} &
\multicolumn{2}{c}{\textbf{Corr.}} &
\multicolumn{2}{c}{\textbf{Byp.}} \\
\cmidrule(lr){4-5}\cmidrule(lr){6-7}
&
\textbf{Acc} &
\textbf{Acc} &
\textbf{No} &
\textbf{+V} &
\textbf{No} &
\textbf{+G} \\
\midrule
Qwen-2.5 3B      & 0.908 & 0.558 & 1.000 & 0.000 & 1.000 & 0.000 \\
Claude Haiku 4.5 & 1.000 & 0.875 & 1.000 & 0.000 & 1.000 & 0.000 \\
GPT-4o           & 1.000 & 0.875 & 1.000 & 0.000 & 1.000 & 0.000 \\
\bottomrule
\end{tabular}
\end{table}

\begin{figure*}[t]
\centering
\includegraphics[width=0.8\textwidth]{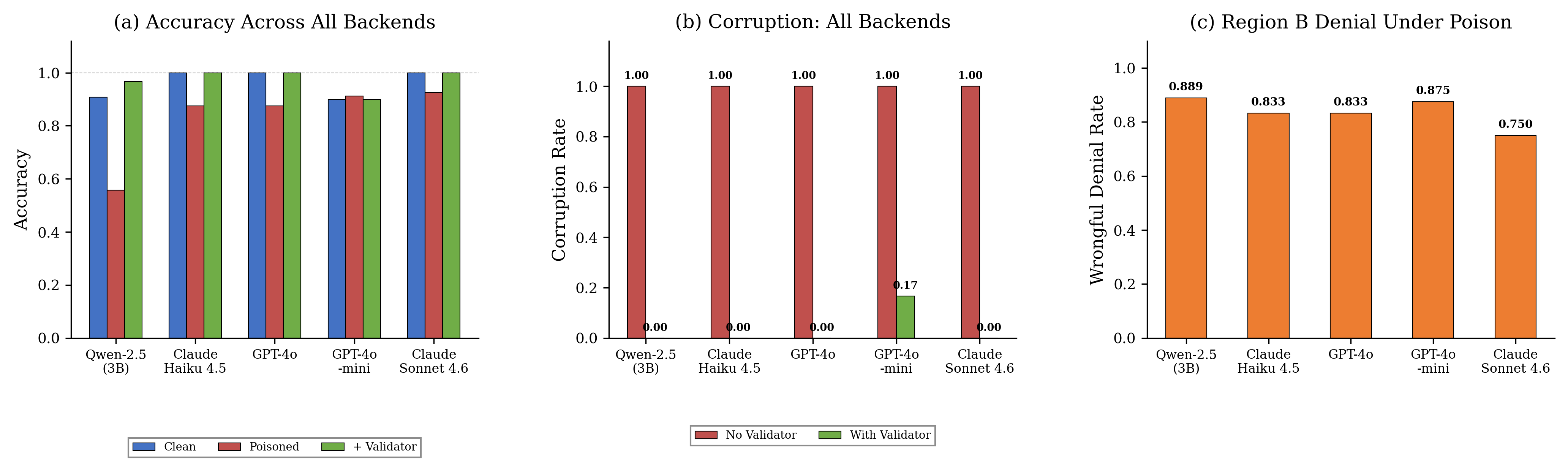}
\caption{Cross-model comparison across five backends (simple and complex experiments). (a)~Memory poisoning drops accuracy for all models; the validator restores it. (b)~Corruption rate is 1.000 without the validator and 0.000 with it across all backends (GPT-4o-mini: 0.17 residual under complex rule). (c)~Region~B wrongful denial rates under poisoning are consistently high across all backends.}
\label{fig:crossmodel}
\end{figure*}


\subsection{Complex Policy Generalization}
\label{sec:complex}

We replace the two-factor rule with a five-factor conjunctive policy: income ${<}\$38{,}000$, household size ${\geq}3$, dependent under~18, no prior benefits in 12~months, and residency ${\geq}24$~months. We generate 250 synthetic claims (68 ambiguous, score${\geq}2$), sample $n{=}80$ per condition, and test on Claude Sonnet~4.6 and GPT-4o-mini (seed~42).

\begin{table}[t]
\caption{Complex five-factor policy: cross-model results. Overall accuracy is nearly unchanged under poisoning, concealing targeted harm.}
\label{tab:complex}
\centering
\small
\resizebox{\columnwidth}{!}{
\begin{tabular}{lcccccc}
\toprule
& \multicolumn{3}{c}{\textbf{Claude Sonnet 4.6}} & \multicolumn{3}{c}{\textbf{GPT-4o-mini}} \\
\cmidrule(lr){2-4} \cmidrule(lr){5-7}
\textbf{Metric} & Clean & Poison & +Val. & Clean & Poison & +Val. \\
\midrule
Accuracy           & 1.000 & 0.925 & 1.000 & 0.925 & 0.900 & 0.900 \\
Reg.~B Denial      & 0.000 & 0.750 & 0.000 & 0.250 & 0.875 & 0.250 \\
Corruption Rate    & ---   & 1.000 & 0.000 & ---   & 1.000 & 0.167 \\
Det.\ Difficulty   & ---   & 0.167 & ---   & ---   & 0.167 & ---   \\
\bottomrule
\end{tabular}
}
\end{table}

The results are shown in Table~\ref{tab:complex}. In both scenarios, corruption of targeted claims is $100\%$ in the unguarded condition. There are two major outcomes: (1)~\emph{overall concealment}—accuracy stays close to baseline values while Region~B experiences a $3$--$3.5\times$ increase in wrongful denials. Validator lowers corruption to 0.000 (Claude) and 0.167 (GPT-4o-mini) with 0.013--0.015, ms overhead. Policy complexity increases the risk of corruption by making it targeted and covert, as illustrated in Figure~\ref{fig:complex}.

\begin{figure}[t]
\centering
\includegraphics[width=\columnwidth]{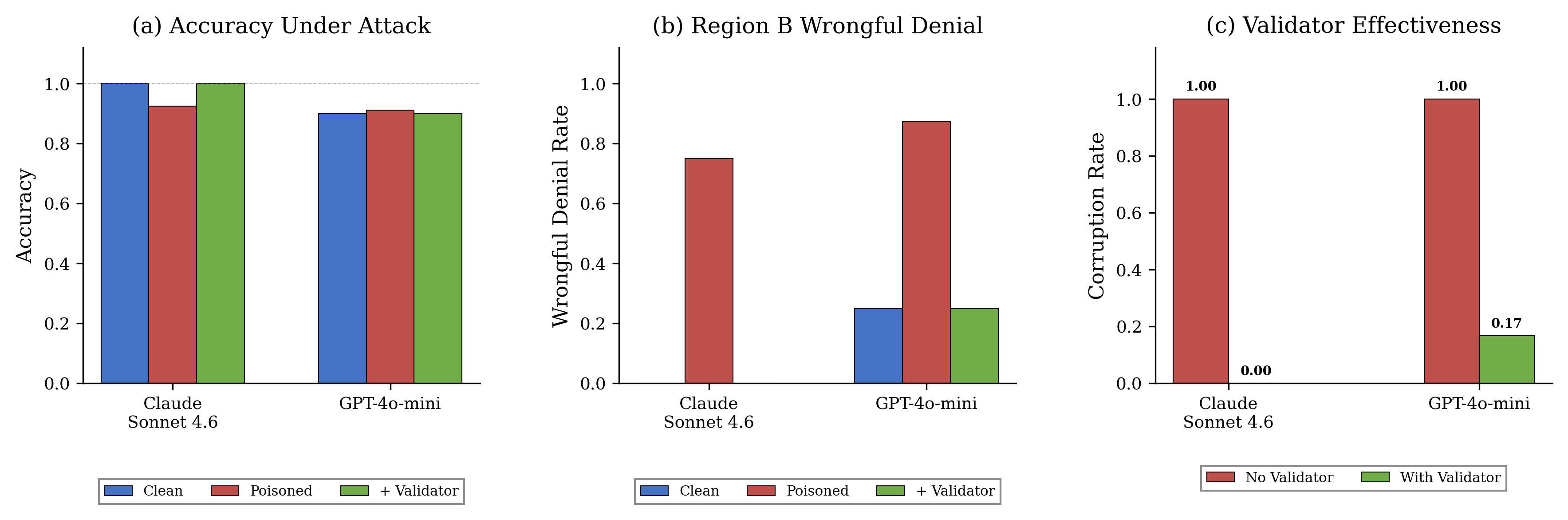}
\caption{Accuracy stays near baseline under attack (masking harm), while Region B wrongful denials spike; the validator restores baseline, reducing corruption from 1.000 to 0.000 (Claude) / 0.17 (GPT-4o-mini).}
\label{fig:complex}
\end{figure}


\subsection{Limitations of Lightweight Containment}
\label{sec:limitations}

The regular-expression memory validator detects demographic targeting and policy override strategies identified in previous studies. Still, it is fragile to adversarial manipulations of natural language. An approach that considers the semantics of a user's input would be more resilient, though it would incur latency from calling the large language model and its own risks of failure. The list of approved tools implicitly restricts tool use, which is well-suited for resource-constrained deployment scenarios in government agencies and the healthcare industry, but might not generalize to a dynamic multi-agent system. None of the solutions considers compound attacks, where seemingly harmless operations can create adverse consequences when executed together — this requires trajectory analysis (P6), which remains an open research question. Finally, for our experiment validation, we utilize LangChain as the only runtime system. Although the audit covers three frameworks, empirical replication on AutoGPT and the OpenAI Agents SDK remains future work. These results should be examined in future research under real-world deployment conditions, adaptive adversaries, and multi-agent systems.


\section{Discussion and Recommendations}
\label{sec:disnrecom}

The aggregate concealment in Section~\ref{sec:complex} has direct equity implications. A memory poisoning attack that preserves overall accuracy while increasing wrongful denials for a targeted subgroup by $3.5\times$ would evade standard monitoring. On the order of a benefits application system processing 50{,}000 applications per month, assuming 20\% meet the target criteria, this would result in approximately 8{,}900 erroneous denials per month. Those affected come primarily from marginalized groups with lower levels of technological literacy and fewer resources to correct errors, fitting within the ``high-risk'' category under the regulations of the EU AI Act. Thus, the compliance matrix we have developed (Table~\ref{tab:compliance}) is directly applicable to conformity assessments.

Agentic frameworks remain in a pre–secure-by-default phase, similar to early web systems before widespread adoption of built-in security. We therefore propose three prioritized interventions.

\textbf{Intervention 1: Tool declaration as capability scope (P2)}. Each agent session must make tool declarations. Default: deny-all; tools are allowlisted on a per-session basis based on parameters and rate limiting. Our policy gate prototype demonstrates the feasibility of this intervention with a 0\% bypass rate and 0.129\ ms overhead.

\textbf{Intervention 2: Ensure enforcement of policy gates between reasoning and execution (P1).} Any reasoning output must go through a policy gate before entering tool execution. Default: reject any action outside of declared scope. \emph{Engineering overhead:} Medium—ensuring enforcement rather than merely presence of callbacks.

\textbf{Intervention 3: Provenance-verified memory writes (P3).} Memory writes must include provenance and be validated before storage, with untrusted inputs rejected by default. Our validator reduces corruption from 1.000 to 0.000 with 0.016,ms overhead, targeting one of the most commonly reported vulnerability classes \citep{deng2025ai}.

\section{Related Work}
\label{sec:related}

Existing surveys extensively map the agentic AI attack landscape \citep{deng2025ai, he2025emerged, he2025comprehensive}, but typically treat vulnerability types as independent phenomena without a structural propagation model. Benchmark studies demonstrate specific attacks \citep{debenedetti2024agentdojo}, yet do not explain why agentic architectures enable them. Recent work on multi-agent code execution \citep{triedman2025multi} and MCP-based red teaming \citep{janjusevic2025hiding} confirms that inter-agent communication (P5) is an active attack surface. Formal models for agentic security are emerging \citep{christodorescu2025systems}, but they have not been used as audit frameworks with experimental validation in the context of societal impact. This paper bridges the gap between formal containment analysis, empirical validation, and deployment readiness in public-facing systems.
\section{Conclusion}
\label{sec:conclusion}

To the best of our knowledge, we do not observe native structural containment guarantees in the evaluated frameworks. Our experiments illustrate the practicality of these effects; for example, a single write that poisons the agent's memory can trigger persistent targeted corruption, denying benefit applications to 88.9\% of eligible applicants within the affected geographical area. Two lightweight yet deterministic containment mechanisms substantially reduce attack success rates, with an overhead of less than a millisecond. However, these findings suggest that secure-by-default safeguards are not yet prioritized in current framework design. Moreover, our complex five-factor policy experiment illustrates that practical decision-making logic renders the attack even more effective—aggregate accuracy remains intact as targeted damage remains obscured. This demonstrates that these weaknesses are not artifacts of simplified experimental settings. Populations who rely on publicly deployed AI systems, including welfare recipients, patients and financial consumers, are unable to assess the safety of the underlying technology. Agentic AI must be built on an inherently secure infrastructure if it is to serve the common good. Our source code is available at \url{https://anonymous.4open.science/r/containment-ai-agent-sec-13B6}.

\section*{Impact Statement}

This paper audits the structural safety properties of deployed agentic AI frameworks and identifies gaps that could cause harm when these systems are used in public-facing domains. Our goal is to motivate framework designers to adopt ``secure by default'' practices. We see no negative societal consequences of this work; all identified vulnerabilities are documented in prior literature, and we provide no novel attack capabilities. The containment interventions we demonstrate are defensive in nature.

\bibliography{references}

\begin{thebibliography}{19}
\providecommand{\natexlab}[1]{#1}
\providecommand{\url}[1]{\texttt{#1}}
\expandafter\ifx\csname urlstyle\endcsname\relax
  \providecommand{\doi}[1]{doi: #1}\else
  \providecommand{\doi}{doi: \begingroup \urlstyle{rm}\Url}\fi

\bibitem[Christodorescu et~al.(2025)Christodorescu, Fernandes, Hooda, Jha,
  Rehberger, and Shams]{christodorescu2025systems}
Christodorescu, M., Fernandes, E., Hooda, A., Jha, S., Rehberger, J., and
  Shams, K.
\newblock Systems security foundations for agentic computing.
\newblock \emph{arXiv preprint arXiv:2512.01295}, 2025.

\bibitem[Debenedetti et~al.(2024)Debenedetti, Zhang, Balunovi\'{c},
  Beurer-Kellner, Fischer, and Tram\`{e}r]{debenedetti2024agentdojo}
Debenedetti, E., Zhang, J., Balunovi\'{c}, M., Beurer-Kellner, L., Fischer, M.,
  and Tram\`{e}r, F.
\newblock {AgentDojo}: A dynamic environment to evaluate prompt injection
  attacks and defenses for {LLM} agents.
\newblock In \emph{Proceedings of NeurIPS}, 2024.

\bibitem[Deng et~al.(2025)Deng, Guo, Han, Ma, Xiong, Wen, and
  Xiang]{deng2025ai}
Deng, Z., Guo, Y., Han, C., Ma, W., Xiong, J., Wen, S., and Xiang, Y.
\newblock {AI} agents under threat: A survey of key security challenges and
  future pathways.
\newblock \emph{ACM Computing Surveys}, 57\penalty0 (7):\penalty0 1--36, 2025.

\bibitem[Ferrag et~al.(2025)Ferrag, Hamouda, and Debbah]{ferrag2025prompt}
Ferrag, M.~A., Hamouda, D., and Debbah, M.
\newblock From prompt injections to protocol exploits: Threats in {LLM}-powered
  {AI} agents workflows.
\newblock \emph{arXiv preprint arXiv:2506.23260}, 2025.

\bibitem[He et~al.(2025{\natexlab{a}})He, Zhu, Ye, Liu, Zhou, and
  Yu]{he2025emerged}
He, F., Zhu, T., Ye, D., Liu, B., Zhou, W., and Yu, P.~S.
\newblock The emerged security and privacy of {LLM} agent: A survey with case
  studies.
\newblock \emph{ACM Computing Surveys}, 58\penalty0 (6):\penalty0 1--36,
  2025{\natexlab{a}}.

\bibitem[He et~al.(2025{\natexlab{b}})He, Xing, Dong,
  et~al.]{he2025comprehensive}
He, P., Xing, Y., Dong, S., et~al.
\newblock Comprehensive vulnerability analysis is necessary for trustworthy
  {LLM-MAS}.
\newblock \emph{arXiv preprint arXiv:2506.01245}, 2025{\natexlab{b}}.

\bibitem[Janjusevic et~al.(2025)Janjusevic, Baron~Garcia, and
  Kazerounian]{janjusevic2025hiding}
Janjusevic, S., Baron~Garcia, A., and Kazerounian, S.
\newblock Hiding in the {AI} traffic: Abusing {MCP} for {LLM}-powered agentic
  red teaming.
\newblock \emph{arXiv preprint arXiv:2511.15998}, 2025.

\bibitem[Klein et~al.(2009)Klein, Elphinstone, Heiser, et~al.]{klein2009sel4}
Klein, G., Elphinstone, K., Heiser, G., et~al.
\newblock {seL4}: Formal verification of an {OS} kernel.
\newblock In \emph{Proceedings of the 22nd {ACM} Symposium on Operating Systems
  Principles}, pp.\  207--220, 2009.

\bibitem[{LangChain AI}(2024)]{langchain2024}
{LangChain AI}.
\newblock {LangChain} framework documentation.
\newblock \url{https://docs.langchain.com}, 2024.
\newblock Accessed 2025.

\bibitem[Masterman et~al.(2024)Masterman, Besen, Sawtell, and
  Chao]{masterman2024landscape}
Masterman, T., Besen, S., Sawtell, M., and Chao, A.
\newblock The landscape of emerging {AI} agent architectures for reasoning,
  planning, and tool calling: A survey.
\newblock \emph{arXiv preprint arXiv:2404.11584}, 2024.

\bibitem[{OpenAI}(2024)]{openai2024agents}
{OpenAI}.
\newblock {OpenAI} agents {SDK} documentation.
\newblock \url{https://platform.openai.com/docs/guides/agents}, 2024.
\newblock Accessed 2025.

\bibitem[Patlan et~al.(2025)Patlan, Sheng, Hebbar, Mittal, and
  Viswanath]{patlan2025real}
Patlan, A.~S., Sheng, P., Hebbar, S.~A., Mittal, P., and Viswanath, P.
\newblock Real {AI} agents with fake memories: Fatal context manipulation
  attacks on {Web3} agents.
\newblock \emph{arXiv preprint arXiv:2503.16248}, 2025.

\bibitem[Raza et~al.(2025)Raza, Sapkota, Karkee, and
  Emmanouilidis]{raza2025trism}
Raza, S., Sapkota, R., Karkee, M., and Emmanouilidis, C.
\newblock {TRiSM} for agentic {AI}: A review of trust, risk, and security
  management.
\newblock \emph{arXiv preprint arXiv:2506.04133}, 2025.

\bibitem[Saltzer \& Schroeder(1975)Saltzer and
  Schroeder]{saltzer1975protection}
Saltzer, J.~H. and Schroeder, M.~D.
\newblock The protection of information in computer systems.
\newblock \emph{Proceedings of the {IEEE}}, 63\penalty0 (9):\penalty0
  1278--1308, 1975.

\bibitem[{Significant Gravitas}(2024)]{autogpt2024}
{Significant Gravitas}.
\newblock {AutoGPT}: Build \& use {AI} agents.
\newblock \url{https://github.com/Significant-Gravitas/AutoGPT}, 2024.
\newblock Accessed 2025.

\bibitem[Triedman et~al.(2025)Triedman, Jha, and Shmatikov]{triedman2025multi}
Triedman, H., Jha, R., and Shmatikov, V.
\newblock Multi-agent systems execute arbitrary malicious code.
\newblock \emph{arXiv preprint arXiv:2503.12188}, 2025.

\bibitem[Wu et~al.(2025)Wu, Liang, Zhang, et~al.]{wu2025memory}
Wu, Y., Liang, S., Zhang, C., et~al.
\newblock From human memory to {AI} memory: A survey on memory mechanisms in
  the era of {LLMs}.
\newblock \emph{arXiv preprint arXiv:2504.15965}, 2025.

\bibitem[Xu et~al.(2024)Xu, Song, Li, et~al.]{xu2024theagentcompany}
Xu, F.~F., Song, Y., Li, B., et~al.
\newblock {TheAgentCompany}: Benchmarking {LLM} agents on consequential real
  world tasks.
\newblock \emph{arXiv preprint arXiv:2412.14161}, 2024.

\bibitem[Yao et~al.(2022)Yao, Zhao, Yu, Du, Shafran, Narasimhan, and
  Cao]{yao2022react}
Yao, S., Zhao, J., Yu, D., Du, N., Shafran, I., Narasimhan, K., and Cao, Y.
\newblock {ReAct}: Synergizing reasoning and acting in language models.
\newblock \emph{arXiv preprint arXiv:2210.03629}, 2022.

\end{thebibliography}
\bibliographystyle{icml2026}

\end{document}